\documentclass[sigconf, nonacm]{acmart}
\renewcommand\footnotetextcopyrightpermission[1]{}
\AtBeginDocument{%
  }

\usepackage{subcaption}

\acmSubmissionID{192}

\begin{document}

\title{ARTA: Adversarial--Robust Multivariate Time--Series Anomaly Detection via Sparsity--Constrained Perturbations}

\author{Hadi Hojjati}
\affiliation{%
  \institution{McGill University \\ MILA - Quebec AI Institute}
  \city{Montreal}
  \state{Quebec}
  \country{Canada}
}
\email{hadi.hojjati@mcgill.ca}

\author{Narges Armanfard}
\affiliation{%
  \institution{McGill University \\ MILA - Quebec AI Institute}
  \city{Montreal}
  \state{Quebec}
  \country{Canada}
}
\email{narges.armanfard@mcgill.ca}


\begin{abstract}
Time--series anomaly detection (TSAD) is a critical component in monitoring complex systems, yet modern deep learning–based detectors are often highly sensitive to localized input corruptions and structured noise. We propose \textbf{ARTA} (\textit{\textbf{A}dversarially \textbf{R}obust multivariate \textbf{T}ime--series \textbf{A}nomaly detection via Sparsity--Constrained Perturbations}), a joint training framework that improves detector robustness through a principled min--max optimization objective.
ARTA comprises an anomaly detector and a sparsity--constrained mask generator that are trained simultaneously. The generator identifies minimal, task--relevant temporal perturbations that maximally increase the detector’s anomaly score, while the detector is optimized to remain stable under these structured perturbations. The resulting masks characterize the detector’s sensitivity to adversarial temporal corruptions and can serve as explanatory signals for the detector’s decisions. This adversarial training strategy exposes brittle decision pathways and encourages the detector to rely on distributed and stable temporal patterns rather than spurious localized artifacts.
We conduct extensive experiments on the TSB--AD benchmark, demonstrating that ARTA consistently improves anomaly detection performance across diverse datasets and exhibits significantly more graceful degradation under increasing noise levels compared to state--of--the--art baselines.
\end{abstract}



\keywords{Time--Series Anomaly Detection, Adversarial Robustness, Multivariate Time--Series, Robust Anomaly Detection, Deep Learning Robustness}


\maketitle
\section{Introduction}
Multivariate time--series anomaly detection (TSAD) is the foundation of safety--critical monitoring systems in domains such as transportation \citep{ecml} and healthcare. In these applications, anomaly scores are not merely used to rank events, but directly drive alarms, mitigation strategies, and human intervention. Accordingly, TSAD systems must exhibit score stability: small, localized perturbations in the input should not induce large changes in anomaly scores unless the underlying system dynamics have meaningfully changed. Recent evidence, however, indicates that this requirement is rarely met. Despite their architectural complexity, state--of--the--art deep TSAD models evaluated on modern benchmarks often yield marginal improvements over simple baselines while exhibiting severe fragility \citep{kim2022towards}. In real deployments, commonplace phenomena such as transient noise, sensor jitter, or partial channel failures can trigger sharp spikes in anomaly scores even under nominal operating conditions, fundamentally undermining their reliability.

This fragility stems from a fundamental misalignment between training objectives and deployment requirements \citep{wang2025improvedtimeseriesanomaly}. Most TSAD methods are optimized to minimize point--wise reconstruction or prediction errors under implicit i.i.d. train–test assumptions. Such objectives incentivize the memorization of local, high--frequency artifacts rather than the learning of a compact, noise--robust representation of normal system behavior. Consequently, ubiquitous deployment noise is frequently misinterpreted as anomalous \citep{RethinkingRobust}. The problem is further amplified by window--based, segment--level detection pipelines: while intended to capture temporal dependencies, they allow small, localized perturbations to propagate through an entire segment’s representation. This induces a strong dependence of anomaly scores on the perturbation’s position within the sliding window, leading to inconsistent and unstable detections across otherwise identical events \citep{Wang_2025}.

Although recent work explores data augmentation \citep{Luo_Cheng_Wang_Xu_Ni_Yu_Zhang_Liu_Chen_Chen_Zhang_2023}, corruption--aware training \citep{corruptionaware}, and robust losses \citep{kim2025causalityaware}, these approaches mainly improve average--case performance and do not explicitly address worst--case, localized temporal perturbations. They lack mechanisms to enforce score consistency under structured, correlated noise, which is the dominant failure mode in real deployments. Time--series corruptions span diffuse noise, correlated disturbances, and sparse local failures; while existing methods handle the first, they remain brittle to the other regimes \citep{gu2025imperceptibleadversarialattackstime}. We argue that robustness in TSAD should be induced directly through the training objective by explicitly penalizing localized temporal sensitivity. This perspective motivates approaches that embed robustness into the learning process itself, rather than treating it as post hoc additions.

Motivated by this principle, we propose ARTA, a joint training framework that explicitly targets score stability under localized perturbations. ARTA couples an anomaly detector with a sparsity--constrained temporal mask generator in an adversarial game: the generator learns to identify minimal temporal regions whose perturbation maximally disrupts the detector’s anomaly score. Crucially, sparsity constraints force these perturbations to remain compact and localized, closely mirroring realistic sensor failures, such as dropouts, spikes, saturation, or miscalibration, which are structured and temporally correlated.

Despite its conceptual simplicity, this paradigm remains largely unexplored in TSAD. Unlike prior masking--based heuristics \citep{LEEmask}, the generator is not a passive explanation module but an active component that regularizes the detector’s decision boundary during training. To the best of our knowledge, ARTA is the first framework to jointly leverage adversarial masking to explicitly regularize temporal robustness in TSAD. We evaluate ARTA on diverse datasets from the TSB--AD benchmark, utilizing metrics that assess both detection accuracy and robustness to structured corruptions. Our results demonstrate consistent improvements in anomaly detection performance alongside substantial gains in score stability under localized perturbations. Our main contributions are summarized as:
\begin{enumerate}
\item We argue that instability to localized temporal corruptions is a fundamental limitation of current paradigms and demonstrate that resolving it is a key driver for superior performance on modern benchmarks.
\item We introduce ARTA, a joint adversarial framework that internalizes robustness by training detectors against sparsity--constrained worst--case temporal perturbations.
\item Through extensive evaluation on modern benchmarks, we show that ARTA achieves a superior trade--off between detection accuracy, robustness, and interpretability compared to both classical TSAD models and modern deep baselines.
\end{enumerate}

\section{Related Works}

Time--series anomaly detection has recently undergone a major methodological reassessment. For many years, apparent progress in the field was driven by increasingly complex deep learning architectures, ranging from attention--based models to graph neural networks, yet subsequent analyses revealed that much of this progress was illusory \citep{wenig2022current}. A central issue was the widespread use of window--level evaluation schemes, such as Point Adjustment (PA), which substantially inflated performance by tolerating imprecise temporal localization. Under these protocols, models could achieve high scores even when failing to identify anomalies at individual timestamps, obscuring fundamental weaknesses in point--wise detection \citep{kim2022towards}.

This evaluation crisis motivated the development of stricter benchmarks, most notably TSB--AD \citep{liu2024tsbad}. Results on TSB--AD demonstrate that many previously reported SOTA methods degrade sharply when evaluated under refined datasets and strict protocols. Consequently, recent research has shifted away from architectural novelty alone toward methods that emphasize robustness and precise point--wise discrimination under realistic noise conditions.
In response to these stricter standards, two dominant paradigms emerged in the past year. The first is the rise of lightweight pre--trained or foundation models for TSAD. Among these, TSPulse \citep{ekambaram2026tspulse} represents the current state of the art, favoring efficiency over scale. Unlike large language model–based approaches such as MOMENT \citep{goswami2024moment} or Chronos \citep{ansari2024chronos}, TSPulse employs an ultra--compact architecture with approximately one million parameters and learns through masked reconstruction in both time and frequency domains. A multi--head triangulation mechanism combines reconstruction and forecasting deviations to produce anomaly scores, yielding strong performance on the TSB--AD leaderboard and surpassing baselines by a substantial margin. Nevertheless, as a reconstruction--driven method, TSPulse remains vulnerable to the identity--mapping problem: high--capacity decoders can faithfully reconstruct stochastic or high--frequency noise, blurring the distinction between benign perturbations and true point--level anomalies.

A second line of work reframes TSAD through the lens of dynamical systems. FOLD \citep{jhin2026pointwise} departs from conventional reconstruction or forecasting errors and instead models anomaly emergence as the accumulation of system stress. By extracting sensitivity and uncertainty--based stress signals and integrating them through a fold--bifurcation ordinary differential equation, it explicitly targets point--wise anomaly detection as a dynamical tipping phenomenon. This formulation avoids window--level aggregation and achieves strong point--wise performance. However, FOLD relies on calibrated thresholds derived from normal--system dynamics rather than learning an explicit discriminative boundary, leaving it sensitive to noise that mimic early stress signals.

Despite these advancements, a fundamental limitation persists across the current TSAD landscape: robust point--wise detection under noisy conditions remains unresolved. Many contemporary methods implicitly retreat from strict point--level decision--making, instead relying on smoothing, block--level reporting, or post--hoc aggregation to suppress false positives. Reconstruction--based models tend to absorb noise into their learned representations, while dynamical approaches model stability without actively rejecting adversarial perturbations \citep{gu2025imperceptibleadversarialattackstime}. As a result, reliable discrimination between stochastic noise and structurally meaningful anomalies at individual timestamps remains an open challenge, even among leading methods on TSB--AD.

To address this gap, we propose an approach that integrates adversarial training directly into point--wise TSAD. By exposing the model to adversarial perturbations during training and coupling this with a mask--based architecture, we explicitly enforce a robust decision boundary between noise and anomalies at the finest temporal resolution. This design directly targets the noise--mixing failure modes revealed by strict point--wise benchmarks and complements recent advances by adding principled robustness rather than additional architectural complexity.

\section{Methodology}

\begin{figure*}
    \centering
    \Description{A block diagram showing the ARTSADIR architecture.}
    \includegraphics[width=0.85\linewidth]{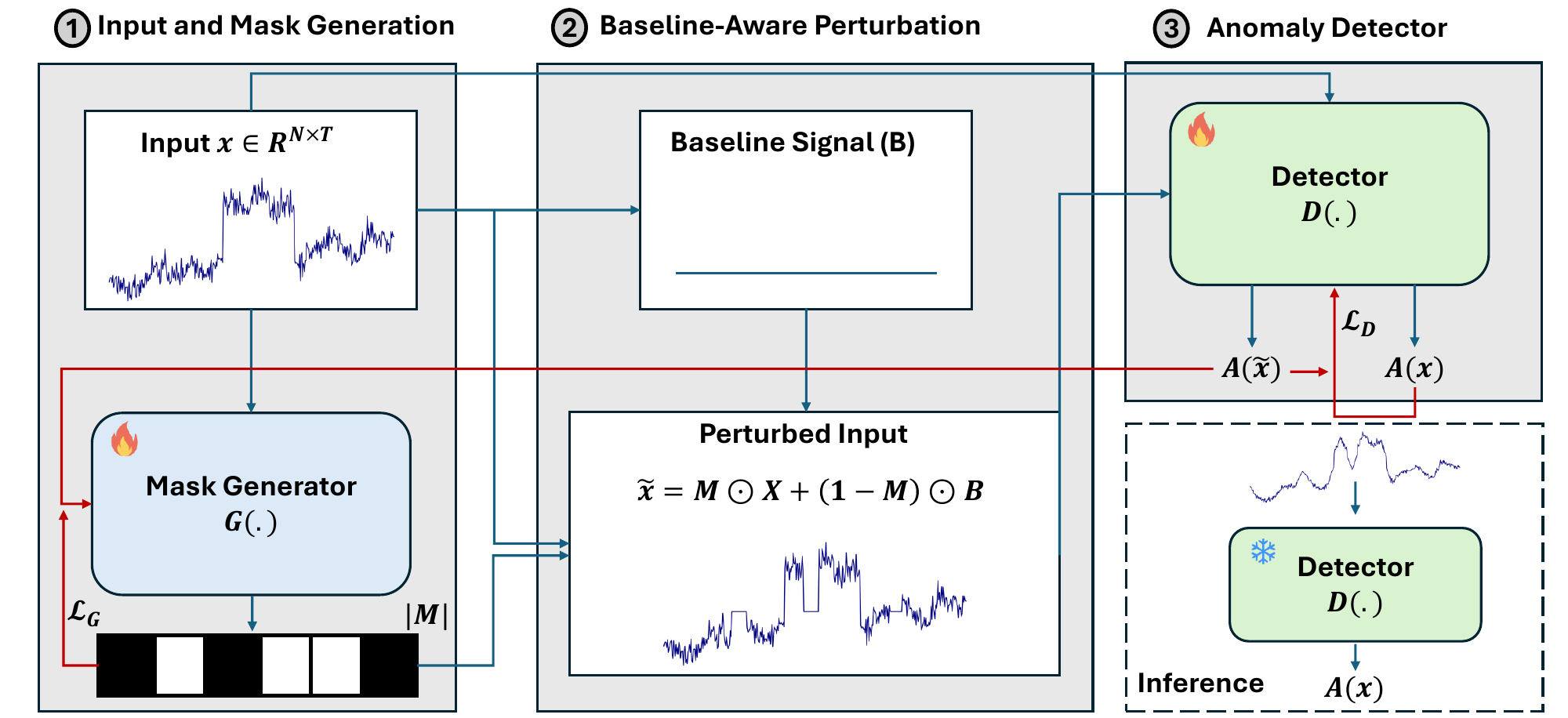}
    \caption{Overview of the proposed method during training and inference.}
    \label{fig:tsadir}
\end{figure*}

\subsection{Problem Setup and Notation}

Let $X \in \mathbb{R}^{N \times T}$ denote a multivariate time series of length $T$ with $N$ sensors. The anomaly detector $D_\theta$ maps an input time--series segment to a scalar anomaly score
\begin{equation}
A_D(X) \in \mathbb{R},
\end{equation}
where larger values indicate a higher likelihood of anomalous behavior.

Importantly, we also consider point--level anomaly scoring. Specifically, the detector produces a sequence of point--wise anomaly scores
\begin{equation}
\mathbf{a}(X) = \{a_t\}_{t=1}^T, \quad a_t \in \mathbb{R},
\end{equation}
where $a_t$ quantifies the anomalousness of the signal at time index $t$. The segment--level anomaly score $A_D(X)$ is obtained by aggregating the point--wise scores over time, i.e.,
\begin{equation}
A_D(X) = \mathcal{G}\big( \{a_t\}_{t=1}^T \big),
\end{equation}
where $\mathcal{G}(\cdot)$ denotes a temporal aggregation operator (e.g., mean or maximum). This formulation enables temporal localization of anomalies at the resolution of individual time points, in contrast to methods that operate exclusively at the segment level and yield a single uninterpretable score per block.
Throughout this work, we focus on semi--supervised anomaly detection, where the detector is trained using only normal data. The anomaly score can be instantiated in different ways depending on the backbone architecture, such as reconstruction error for autoencoder--based models or prediction error for forecasting--based models.

\subsection{Mask Generator and Perturbation Model}

The mask generator $G_\phi$ takes the input time--series and produces a real--valued temporal mask

\begin{equation}
    M = G_\phi(X), \quad M \in [0,1]^T
\end{equation}
\b{where $T$ denotes the size of the mask that} identifies influential time stamps for the detector's anomaly score. We restrict the mask to the temporal dimension to encourage sensor--agnostic temporal robustness and avoid trivial feature--wise suppression.

The mask is broadcast across feature dimensions and applied to the input using a baseline--aware perturbation model:

\begin{equation}
\tilde{X} = X \odot M + (1 - M) \odot B    
\end{equation}
where $B$ is a baseline signal (e.g., per--sensor mean or zero), and $\odot$ denotes element--wise multiplication. This formulation avoids scale artifacts and distribution shifts associated with hard masking. In our implementation, the baseline $B$ is chosen as the per--sensor temporal mean computed within each input sequence, without using future context, resulting in a scale--preserving and in--distribution perturbation.This choice ensures that masked regions are replaced with a statistically neutral reference signal, preventing the introduction of artificial discontinuities that could lead to trivial adversarial solutions. Alternative baseline configurations are explored in Section~\ref{tab:baseline}. Since the generator operates exclusively on normal training data, it learns to identify model--sensitive temporal regions that expose the detector’s vulnerabilities, rather than localizing true anomalous patterns.

While conventional adversarial training typically injects diffuse, unstructured noise, our framework instead constructs temporally structured, sparsity--constrained perturbations that remain domain--aligned and closely reflect real--world failure modes. Diffuse i.i.d. noise is deliberately excluded from the training objective: such noise is already well handled by standard data augmentation, smoothing, and robust optimization techniques, and it is not the dominant source of catastrophic anomaly score instability. Our focus is therefore on the more challenging failure regime of localized, structured perturbations that disrupt short--term dynamics and induce disproportionate changes in anomaly scores.  Additional mathematical proofs are provided in Section~\ref{sec:theory} to offer deeper insight into the theoretical foundations of ARTA's robustness.

\subsection{Generator Objective}

The generator is trained to identify minimal temporal regions that are sufficient to increase the detector's anomaly score. Its objective balances two competing goals:
(i) maximizing the anomaly score of the perturbed input, and
(ii) enforcing sparsity of the mask.
The generator loss is defined as:
\begin{equation}
\mathcal{L}_G(\phi) =
- A_D(\tilde{X}) + \lambda \|M\|_1   
\end{equation}
where $\lambda > 0$ controls the sparsity strength. The first term encourages the generator to find task--relevant perturbations that increase the detector's anomaly score (\textit{e.g.} reconstruction error in autoencoders), while the $\ell_1$ regularization promotes compact and localized masks.

\subsection{Detector Objective}

The detector is trained to minimize the anomaly score for both clean and masked signals, encouraging invariance to generator--induced perturbations. The detector loss is defined as:
\begin{equation}
\mathcal{L}_D(\theta) =
A_D(X) + \gamma A_D(\tilde{X}),
\end{equation}
where $\gamma \geq 0$ controls the emphasis on stability. Note that setting $\gamma = 0$ eliminates the loss component responsible for encouraging robustness of the detector with respect to the generator--induced perturbations.

Minimizing $\mathcal{L}_D$ encourages the detector to rely on distributed temporal features rather than a small set of highly sensitive time steps.

\subsection{Joint Adversarial Optimization}

Figure~\ref{fig:tsadir} illustrates the joint training and inference procedure. The overall training objective is formulated as a min--max optimization problem:
\begin{equation}
\min_{\theta} \max_{\phi} \;
\mathbb{E}_{X \sim \mathcal{D}_{\text{normal}}}
\left[
\mathcal{L}_D(\theta) - \mathcal{L}_G(\phi)
\right].
\end{equation}

In practice, we adopt an alternating optimization strategy in which the generator and detector are updated in turn using stochastic gradient descent. This adversarial interaction encourages the generator to identify increasingly informative sparse temporal masks, while simultaneously driving the detector to become more stable under such structured perturbations. To promote training stability, we first pretrain the detector for a small number of warm--up epochs before initiating joint adversarial training.

At inference time, anomaly scores are computed exclusively using the detector, and the generator does not participate in the decision rule. Instead, the generator acts as a training--time adversary that exposes model sensitivity and enforces robustness during learning. We explore mask--aware scoring variants that incorporate the learned masks as relevance weights or quantify detector sensitivity gaps (Section~\ref{sec:scores}); however, detector--only scores are used as the default throughout our experiments.

\section{Theoretical Analysis}

\subsection{Theoretical Foundations of Stability}
\label{sec:theory}

We analyze how sparsity--constrained masking induces stability of anomaly scores under localized input perturbations. Our analysis is deterministic and does not rely on probabilistic assumptions or robustness certification guarantees.

\paragraph{Setup.}
Let $A_D : \mathbb{R}^{N \times T} \rightarrow \mathbb{R}$ denote the anomaly scoring function implemented by the detector $D_\theta$. We assume that $A_D$ is Lipschitz continuous with respect to the $\ell_1$ norm; that is, there exists a constant $L > 0$ such that
\begin{equation}
\label{eq:lipschitz}
|A_D(X_1) - A_D(X_2)| \le L \|X_1 - X_2\|_1
\end{equation}
for all $X_1, X_2$ in the input domain. While this assumption is standard in stability analyses, we explicitly satisfy it in practice by applying Spectral Normalization to the weight matrices of the detector \cite{miyato2018spectral}.

\begin{theorem}[Stability Under Sparse Baseline--Aware Masking]
\label{thm:stability}
Let $\delta \in \mathbb{R}^{N \times T}$ satisfy $\|\delta\|_\infty \le \varepsilon$, and let $M \in [0,1]^T$ be a fixed temporal mask. Define the perturbed masked input as
\begin{equation}
\tilde X_\delta = (X + \delta) \odot M + (1 - M) \odot B.
\end{equation}
Then the anomaly score satisfies the bound
\begin{equation}
\left| A_D(\tilde X_\delta) - A_D(\tilde X) \right|
\le L \, \varepsilon \, \|M\|_1 .
\end{equation}
\end{theorem}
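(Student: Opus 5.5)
The plan is to reduce the statement to the Lipschitz assumption \eqref{eq:lipschitz}, applied to the two masked inputs $\tilde X_\delta$ and $\tilde X$. First I will compute their difference explicitly. Both use the same mask $M$ and the same baseline $B$, and here $B$ is treated as fixed and not recomputed from $X+\delta$. The baseline terms $(1-M)\odot B$ then cancel, and linearity of the Hadamard product gives
\begin{equation*}
\tilde X_\delta - \tilde X = (X+\delta)\odot M - X\odot M = \delta \odot M ,
\end{equation*}
where $M$ is broadcast across the $N$ sensor rows. Invoking \eqref{eq:lipschitz} yields $|A_D(\tilde X_\delta) - A_D(\tilde X)| \le L\,\|\delta\odot M\|_1$. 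This reduces everything to bounding an entrywise $\ell_1$ norm.

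Second, I will bound $\|\delta\odot M\|_1 = \sum_{n=1}^N\sum_{t=1}^T |\delta_{n,t}|\,M_t$. Since $M_t\in[0,1]$ is nonnegative, I can use $|\delta_{n,t}|\le\|\delta\|_\infty\le\varepsilon$ termwise, which gives $\|\delta\odot M\|_1 \le \varepsilon \sum_{n,t} M_t$.

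The one delicate point, which I expect to be the main obstacle, is norm bookkeeping in this last sum. If $\|M\|_1$ means the $\ell_1$ norm of the mask after broadcasting to $N\times T$ (the form in which it actually acts on the input), then $\sum_{n,t} M_t = \|M\|_1$ exactly and the stated bound follows. If instead $\|M\|_1$ denotes the norm of the temporal vector $M\in[0,1]^T$, the sum equals $N\|M\|_1$. In that case the honest bound is $L\,\varepsilon\,N\,\|M\|_1$, and the theorem holds as written only after absorbing $N$ into $L$ (equivalently, measuring the Lipschitz constant per sensor). I will adopt the broadcast interpretation, which is consistent with how $M$ enters $\tilde X$, and remark on the factor $N$ under the other reading.

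Finally, I will note that the argument is deterministic and uses nothing beyond \eqref{eq:lipschitz}, the fixedness of $M$ and $B$, and $M\ge 0$. The sparsity penalty $\lambda\|M\|_1$ in $\mathcal L_G$ therefore directly controls the sensitivity of the score to perturbations on the unmasked region.
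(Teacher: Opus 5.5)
Your proposal is correct and essentially matches the paper's proof: cancel the shared baseline term to get $\tilde X_\delta-\tilde X=\delta\odot M$, apply the $\ell_1$-Lipschitz bound, and bound each entry by $\varepsilon$. The factor $N$ you flag is handled in the paper exactly as in your fallback, by absorbing sensor-count constants into $L$. Since the statement fixes $M\in[0,1]^T$, that literal reading is the natural one to present, and your explicit $L\varepsilon N\|M\|_1$ bound plus your remark that $B$ must not be recomputed from $X+\delta$ make your version slightly more careful than the paper's.
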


\begin{proof}
By Lipschitz continuity of $A_D$ in~\eqref{eq:lipschitz},
\[
\left| A_D(\tilde X_\delta) - A_D(\tilde X) \right|
\le L \| \tilde X_\delta - \tilde X \|_1.
\]
Substituting the definition of $\tilde X_\delta$ yields
\[
\tilde X_\delta - \tilde X = \delta \odot M,
\]
since the baseline term cancels. Because $\|\delta\|_\infty \le \varepsilon$ and the mask is broadcast across feature dimensions, we obtain
\[
\| \delta \odot M \|_1
\le \varepsilon \sum_{t=1}^{T} M_t
= \varepsilon \|M\|_1,
\]
where constant factors related to the number of sensors are absorbed into the Lipschitz constant $L$. Combining these results proves the claim.
\end{proof}

\paragraph{Interpretation.}
The bound shows that the detector’s sensitivity to bounded input perturbations is controlled by the sparsity of the temporal mask. In particular, sparser masks induce tighter upper bounds on the variability of the anomaly score under localized corruptions. While this result does not provide global robustness guarantees, it formalizes how explanation compactness constrains the effective perturbation subspace and upper--bounds worst--case score deviations.

Importantly, this stability result is conditional on the detector exhibiting bounded sensitivity. During training, sparsity--constrained adversarial masking restricts the space of perturbations explored by the generator, acting as an implicit structural regularizer that encourages the detector to rely on distributed temporal evidence rather than highly localized, brittle features.

\subsection{Masked Perturbations as a Structured Function Class}
\label{sec:structure}

This section analyzes the space of perturbed inputs induced by sparsity--constrained masking. Rather than adopting a probabilistic interpretation, we treat masking as a deterministic structural operation that restricts how the input time series can be modified.

Let $X \in \mathbb{R}^{N \times T}$ denote a bounded multivariate time series and let $B \in \mathbb{R}^{N \times T}$ be a fixed baseline signal. For a mask $M \in [0,1]^T$, we define the masked input as
\[
\tilde{X}(M) = X \odot M + (1 - M) \odot B.
\]

We denote by $\mathcal{M}_k = \{ M \in [0,1]^T \mid \|M\|_1 \leq k \}$ the set of masks with bounded $\ell_1$ norm.

\begin{theorem}[Bounded Perturbation Capacity Under Sparse Masking]
\label{prop:bounded_capacity}
Let $X$ and $B$ be bounded signals. Then the family of masked inputs
\[
\mathcal{X}_k = \{ \tilde{X}(M) \mid M \in \mathcal{M}_k \}
\]
has bounded diameter under the $\ell_1$ norm:
\[
\sup_{\tilde{X}_1, \tilde{X}_2 \in \mathcal{X}_k}
\| \tilde{X}_1 - \tilde{X}_2 \|_1
\leq 2k \| X - B \|_\infty.
\]
\end{theorem}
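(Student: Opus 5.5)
The plan is to express the difference of two masked inputs in closed form and then bound it entrywise, as in the proof of Theorem~\ref{thm:stability}. Take any $\tilde X_1=\tilde X(M_1)$ and $\tilde X_2=\tilde X(M_2)$ with $M_1,M_2\in\mathcal M_k$. Since $\tilde X(M)=B+(X-B)\odot M$, the baseline term cancels in the difference:
\[
\tilde X_1-\tilde X_2=(X-B)\odot(M_1-M_2),
\]
where the mask difference is broadcast across the $N$ sensors. This identity is the whole content of the argument. The boundedness of $X$ and $B$ is used only to guarantee that $\|X-B\|_\infty<\infty$.

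Next I take the $\ell_1$ norm entrywise:
\[
\|\tilde X_1-\tilde X_2\|_1=\sum_{t=1}^T |M_{1,t}-M_{2,t}|\sum_{n=1}^N |X_{nt}-B_{nt}|.
\]
The inner sum is bounded by a quantity independent of $t$, and the outer sum by the triangle inequality together with the constraint defining $\mathcal M_k$:
\[
\|M_1-M_2\|_1\le\|M_1\|_1+\|M_2\|_1\le 2k .
\]
Combining the two gives a bound of the form $2k\,\|X-B\|_\infty$. Taking the supremum over pairs in $\mathcal X_k$ then gives the claim, since the bound is uniform in $M_1,M_2$.

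The only delicate step is the inner sum over sensors, which is where the factor $N$ appears.

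\begin{itemize}
\item If $\|\cdot\|_\infty$ is read as the entrywise maximum, the inner sum is at most $N\|X-B\|_\infty$. The bound then holds with an extra factor $N$, which is absorbed into the constant exactly as the proof of Theorem~\ref{thm:stability} absorbs its sensor factor into $L$.
\item If one wants the stated inequality exactly, $\|X-B\|_\infty$ should be read as the mixed norm $\max_t\sum_n|X_{nt}-B_{nt}|$, the largest per-time-step column $\ell_1$ norm. With that reading the bound $2k\|X-B\|_\infty$ is exact.
\end{itemize}

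I would state this convention explicitly in the proof. I would also note that the constant $2$ is essentially tight: it is approached when $M_1$ and $M_2$ are disjointly supported, each with mass $k$, on time steps where $|X-B|$ is maximal.
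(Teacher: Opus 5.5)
Your proof is correct and is the paper's argument: the baseline cancels to give $\tilde X_1-\tilde X_2=(X-B)\odot(M_1-M_2)$, H\"older bounds the $\ell_1$ norm by $\|X-B\|_\infty\|M_1-M_2\|_1$, and the triangle inequality gives $\|M_1-M_2\|_1\le 2k$. Your care with the sensor sum is warranted. The paper applies H\"older without noting the factor $N$ introduced by broadcasting the $T$-dimensional mask: for $N=T=2$, $X-B\equiv 1$, $k=1$, $M_1=(1,0)$, $M_2=(0,1)$, the distance is $4>2=2k\|X-B\|_\infty$ under the entrywise max. So the stated inequality holds exactly under your mixed-norm reading $\max_t\sum_n|X_{nt}-B_{nt}|$ (or when $N=1$), but fails under the entrywise max for $N\ge 2$. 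Your first option proves only the weaker $2kN\|X-B\|_\infty$, since this statement, unlike Theorem~\ref{thm:stability}, has no Lipschitz constant to absorb $N$.
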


\begin{proof}
Let $M_1, M_2 \in \mathcal{M}_k$. Then
\[
\| \tilde{X}(M_1) - \tilde{X}(M_2) \|_1
= \| (X - B) \odot (M_1 - M_2) \|_1.
\]
Applying Hölder's inequality yields
\[
\| (X - B) \odot (M_1 - M_2) \|_1
\leq \| X - B \|_\infty \| M_1 - M_2 \|_1.
\]
Since $\|M_1\|_1, \|M_2\|_1 \leq k$, we have
\[
\| M_1 - M_2 \|_1 \leq \|M_1\|_1 + \|M_2\|_1 \leq 2k,
\]
Therefore,
\[
\| \tilde{X}_1 - \tilde{X}_2 \|_1 \leq 2k \|X--b\|_{\infty} \qquad \forall \tilde{X}_1,\tilde{X}_2 \in \mathcal{X}_k,
\]
Which can be written as
\[
\sup_{\tilde{X}_1, \tilde{X}_2 \in \mathcal{X}_k}
\| \tilde{X}_1 - \tilde{X}_2 \|_1
\leq 2k \| X - B \|_\infty.
\]
which concludes the proof.
\end{proof}

\begin{theorem}[Bounded Anomaly Score Variability]
\label{boundedAV}
Let $A_D$ be $L$-Lipschitz with respect to the $\ell_1$ norm. Then for any $k > 0$, the anomaly score variation over the family of sparsity--constrained masked inputs
\[
\mathcal{X}_k = \{ X \odot M + (1 - M) \odot B \mid \|M\|_1 \le k \}
\]
is bounded as
\begin{equation}
\sup_{\tilde X_1, \tilde X_2 \in \mathcal{X}_k}
\left| A_D(\tilde X_1) - A_D(\tilde X_2) \right|
\le 2 L k \|X - B\|_\infty .
\end{equation}
\end{theorem}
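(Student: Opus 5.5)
The plan is to obtain the bound by composing the Lipschitz assumption~\eqref{eq:lipschitz} with the diameter bound of Theorem~\ref{prop:bounded_capacity}. No new estimate should be needed; the content of the statement is that score variability is controlled by the geometric size of $\mathcal{X}_k$.

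Fix arbitrary $\tilde X_1, \tilde X_2 \in \mathcal{X}_k$. By definition there are masks $M_1, M_2 \in \mathcal{M}_k$ with $\tilde X_i = \tilde X(M_i)$. Since $A_D$ is $L$-Lipschitz with respect to the $\ell_1$ norm on the whole input domain, and both masked inputs lie in that domain,
\[
\left| A_D(\tilde X_1) - A_D(\tilde X_2) \right| \le L \, \| \tilde X_1 - \tilde X_2 \|_1 .
\]
Next I would bound the right-hand side by the diameter of $\mathcal{X}_k$. Theorem~\ref{prop:bounded_capacity} gives
\[
\| \tilde X_1 - \tilde X_2 \|_1 \le \sup_{\tilde X_1', \tilde X_2' \in \mathcal{X}_k} \| \tilde X_1' - \tilde X_2' \|_1 \le 2k \|X - B\|_\infty .
\]
Multiplying by $L \ge 0$ then gives $|A_D(\tilde X_1) - A_D(\tilde X_2)| \le 2Lk\|X-B\|_\infty$ for every pair. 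Because the right-hand side does not depend on the pair, taking the supremum over $\tilde X_1, \tilde X_2 \in \mathcal{X}_k$ preserves the inequality and yields the claim.

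The only delicate point is the bookkeeping behind Theorem~\ref{prop:bounded_capacity}, specifically the Hölder step $\|(X-B)\odot(M_1-M_2)\|_1 \le \|X-B\|_\infty \|M_1-M_2\|_1$. The mask is broadcast across $N$ sensors, so this step implicitly uses either a per-time-step $\ell_\infty$ norm or a factor of $N$ absorbed into a constant. To stay consistent, I would adopt the same convention as the proof of Theorem~\ref{thm:stability}, where sensor-count factors are absorbed into $L$, and note that explicitly. With that convention the result follows directly from the previous theorem, so I expect no substantive obstacle. Finally, $X$ and $B$ are assumed bounded, so $\|X-B\|_\infty < \infty$ and the bound is finite.
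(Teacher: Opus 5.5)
Your argument is exactly the paper's: you apply the $\ell_1$-Lipschitz bound to each pair and then the diameter bound of Theorem~\ref{prop:bounded_capacity}, which is all the paper's two-line proof does. Your caveat about broadcasting is correct, and the paper's statement actually fails as written: with the entrywise max norm, the H\"older step only gives $N\|X-B\|_\infty\|M_1-M_2\|_1$, and for $N \ge 2$ the stated bound can be violated, so the clean fix is to read $\|X-B\|_\infty$ as $\max_t \sum_n |X_{n,t}-B_{n,t}|$ (or keep the factor $N$ explicit) rather than absorb it into the constant $L$, which the hypothesis already fixes.
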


\begin{proof}
By Lipschitz continuity of $A_D$ and Theorem~2,
\[
\left| A_D(\tilde X_1) - A_D(\tilde X_2) \right|
\le L \|\tilde X_1 - \tilde X_2\|_1
\le 2 L k \|X - B\|_\infty.
\]
\end{proof}

\paragraph{Implications for Stability of Anomaly Scores.}

Theorem~\ref{prop:bounded_capacity} shows that enforcing sparsity on the mask restricts the variability of masked inputs. Combined with the Lipschitz continuity of the anomaly detector (Theorem~\ref{boundedAV}), this implies that the anomaly score cannot vary arbitrarily within the family $\mathcal{X}_k$.

In particular, sparse masking constrains the generator to explore a limited perturbation space, preventing highly distributed or large--magnitude corruptions. This structural restriction acts as an implicit regularizer, encouraging stability of the detector with respect to localized temporal perturbations.

\section{Experiments}

In this section, we evaluate ARTA through a series of experiments on the rigorous TSB--AD benchmark~\citep{liu2024tsbad}. TSB--AD comprises multiple datasets carefully curated to mitigate the evaluation pitfalls and flawed labels common in legacy TSAD benchmarks, ensuring a standardized and high--fidelity assessment. In all experiments, we utilize an LSTM--based autoencoder as the detector, chosen for its inherent inductive bias toward temporal continuity, and an LSTM as the generator. An empirical validation of this backbone choice against alternative architectures is provided in Section~\ref{Backbone}.

\subsection{Comparison with State--of--the--Art}

We first compare ARTA against several state--of--the--art anomaly detection methods. To ensure the integrity of our benchmarking, we utilized the implementations provided by the TSB--AD benchmark where available. For all other methods, we relied on the original implementations and hyperparameters specified in their respective papers. For evaluation, we use VUS--PR, which is well--suited for time--series anomalies that span multiple consecutive time steps. Standard point--based metrics, such as Precision, Recall, F--score, or AUC--ROC/AUC--PR, often fail to capture the temporal extent of anomalies and are sensitive to noise or small misalignments between predicted and true ranges. VUS--PR~\citep{paparrizos2022volume} addresses these limitations by providing a threshold--independent, parameter--free assessment. It constructs a 3D surface over all thresholds and buffer sizes and computes its volume, offering a holistic measure of detection performance for both point- and range--based anomalies. We report performance using other evaluation metrics in Appendix~\ref{app:metrics}.
Table~\ref{tab:sota} summarizes the performance of ARTA in comparison with state--of--the--art methods. Our approach consistently achieves high VUS--PR scores across the benchmark, demonstrating its ability to accurately detect temporally extended anomalies across diverse datasets, which vary in anomaly types, durations, and occurrence ratios. We observe that ARTA tops the performance chart in 7 out of 10 datasets and comes second in the remaining three. Even for the datasets where it ranks second, our subsequent experiments in Section~\ref{robustness} show potential improvements in robustness.
These results can be attributed to the training procedure of ARTA. The adversarial training encourages the detector to remain stable against mask--guided perturbations, driving the model to focus on distributed temporal patterns rather than isolated point values. This enhances robustness to non--anomalous fluctuations and noise, allowing the model to reliably distinguish true anomalies during inference. Consequently, ARTA captures complex, distributed temporal features that are critical for accurate and robust anomaly detection across diverse datasets. The following sections further analyze complementary ablations and anomaly scoring strategies to isolate the contributions of individual components.

\begin{table*}
\small
    \centering
    \begin{tabular}{ccccccccccc}
\hline
Method & CATSv2 & Daphnet & Exathlon & GECCO & LTDB & MITDB & OPP & PSM & SMD & SVDB \\
\hline
PCA~\cite{jolliffe1986principal} & 0.12 & 0.13 & \textbf{0.95} & \textbf{0.20} & 0.24 & 0.07 & 0.30 & 0.16 & 0.36 & 0.11 \\
MCD~\cite{mcd} & 0.13 & 0.14 & 0.80 & 0.03 & 0.21 & 0.04 & 0.17 & 0.26 & 0.26 & 0.07 \\
OCSVM~\cite{ocsvm4ts}  & 0.08 & 0.06 & 0.83& 0.04 & 0.20 & 0.04 & 0.12 & 0.19 & 0.28 & 0.06 \\
KNN~\cite{knn} & 0.07 & 0.25 & 0.33 & 0.11 & 0.19 & 0.04 & 0.06 & 0.12 & 0.30 & 0.06 \\
LOF~\cite{breunig2000lof} & 0.05 & 0.11 & 0.16 & 0.13 & 0.19 & 0.04 & 0.10 & 0.15 & 0.16 & 0.06 \\
KMeansAD~\cite{kmeansad} & 0.12 & 0.30 & 0.37 & 0.06 & 0.41 & 0.06 & 0.06 & 0.21 & 0.36 & 0.20 \\
CBLOF~\cite{cblof} & 0.06 & 0.10 & 0.86 & 0.03 & 0.20 & 0.04 & 0.14 & 0.19 & 0.22 & 0.07 \\
IForest~\cite{iforest} & 0.05 & 0.13 & 0.35 & 0.04 & 0.21 & 0.04 & 0.18 & 0.19 & 0.26 & 0.07 \\
HBOS~\cite{hbos} & 0.05 & 0.15 & 0.32 & 0.04 & 0.21 & 0.04 & 0.17 & 0.17 & 0.25 & 0.07 \\
AutoEncoder~\cite{autoencoder} & 0.06 & 0.13 & 0.91 & 0.05 & 0.21 & 0.04 & 0.14 & \underline{0.28} & 0.30 & 0.06 \\
LSTMAD~\cite{lstmad} & 0.04 & 0.31 & 0.82 & 0.02 & 0.30 & 0.09 & 0.17 & 0.24 & 0.33 & 0.15 \\
DeepAnT~\cite{munir2018deepant} & 0.08 & 0.21 & 0.68 & 0.03 & 0.33 & \underline{0.14} & 0.16 & 0.22 & 0.35 & 0.19 \\
Donut~\cite{donut} & 0.07 & 0.17 & 0.66 & 0.03 & 0.26 & 0.12 & 0.15 & 0.20 & 0.19 & 0.11 \\
RobustPCA~\cite{robustpca} & 0.04 & 0.06 & 0.77 & 0.02 & 0.23 & 0.04 & 0.13 & 0.12 & 0.10 & 0.08 \\
OmniAnomaly~\cite{OmniAnomaly} & 0.04 & 0.34 & 0.84 & 0.02 & 0.44 & 0.11 & 0.18 & 0.16 & 0.17 & 0.35 \\
EIF~\cite{eif} & 0.06 & 0.15 & 0.41 & 0.04 & 0.19 & 0.04 & 0.10 & 0.18 & 0.32 & 0.07 \\
COPOD~\cite{li2020copod} & 0.05 & 0.11 & 0.40 & 0.04 & 0.21 & 0.04 & 0.17 & 0.20 & 0.19 & 0.07 \\
USAD~\cite{audibert2020usad} & 0.04 & 0.34 & 0.84 & 0.02 & 0.41 & 0.12 & 0.18 & 0.19 & 0.16 & 0.32 \\
AnomalyTransformer~\cite{xu2021anomaly} & 0.03 & 0.07 & 0.10 & 0.02 & 0.21 & 0.05 & 0.07 & 0.21 & 0.07 & 0.08 \\
TimesNet~\cite{wu2022timesnet} & 0.07 & 0.27 & 0.42 & 0.03 & 0.27 & 0.07 & 0.06 & 0.14 & 0.14 & 0.11 \\
TranAD~\cite{tuli2022tranad} & 0.04 & 0.31 & 0.10 & 0.02 & 0.26 & 0.07 & 0.16 & 0.23 & 0.30 & 0.12 \\
FITS~\cite{xu2023fits} & 0.13 & 0.33 & 0.63 & 0.03 & 0.23 & 0.05 & 0.05 & 0.13 & 0.17 & 0.10 \\
OFA~\cite{zhou2023one} & 0.13 & 0.31 & 0.58 & 0.04 & 0.29 & 0.06 & 0.05 & 0.17 & 0.17 & 0.12 \\
TSPulse (ZS)~\cite{ekambaram2026tspulse} & 0.05 & 0.35 & 0.89 & 0.17 & 0.36 & 0.07 & 0.07 & 0.14 & 0.35 & 0.38 \\
TSPulse (FT)~\cite{ekambaram2026tspulse} & 0.07 & 0.35 & 0.91 & \underline{0.18} & \textbf{0.57} & \underline{0.14} & 0.07 & 0.14 & 0.36 & \underline{0.47} \\
FOLD~\cite{jhin2026pointwise} & \underline{0.23} & \underline{0.39} & \underline{0.93} & 0.08 & 0.37 & 0.09 & \textbf{0.82} & 0.19 & \textbf{0.46} & 0.39 \\
\textbf{ARTA} & \textbf{0.31} & \textbf{0.41} & \textbf{0.95} & \textbf{0.20} & \underline{0.45} & \textbf{0.21} & \underline{0.35} & \textbf{0.30} & \underline{0.37} & \textbf{0.48} \\
\hline

    \end{tabular}
    \caption{Comparison of VUS--PR ($\uparrow$) between ARTA and other baselines on 10 Multivariate Time--Series Anomaly Detection Datasets.}
    \label{tab:sota}
\end{table*}

\subsection{Robustness Evaluation}
\label{robustness}
To evaluate the robustness of ARTA under input perturbations, we compare its performance against three baselines from previous experiments on three TSB--AD datasets~\citep{liu2024tsbad}. We assess robustness by injecting additive noise into the test time series and measuring how detection performance degrades as the perturbation strength increases. We assess the robustness of ARTA under two challenging noise regimes that mimic real--world corruptions. In addition, we evaluate robustness to less challenging Gaussian noise in Appendix~\ref{app:nongauss}.In all our experiments in this section, noise is applied only at test time and not during training, ensuring that the evaluation isolates the robustness of the trained model rather than confounding it with noise--aware learning.

\textbf{Salt--and--Pepper Noise:} Salt--and--pepper noise randomly replaces a fraction $p$ of time--series values with extreme minimum or maximum values. For evaluation, we vary $p$ from 1\% (mild) to 20\% (severe), and plot the VUS--PR with respect to the corruption probability in Figure~\ref{fig:robust_noise}.We observe that ARTA exhibits significantly slower performance degradation compared to the other baselines. Salt--and--pepper noise, which simulates sparse impulsive corruption, particularly challenges methods that rely on a few isolated points for anomaly detection. By being trained adversarially to withstand such corruptions, ARTA maintains superior robustness, especially in the high--corruption regime.

\textbf{Colored Noise:} Colored noise exhibits temporal correlations and is more challenging than uncorrelated Gaussian noise. We generate additive colored noise using a first--order autoregressive process:
\[
n_t = \rho n_{t--1} + \epsilon_t, \quad \epsilon_t \sim \mathcal{N}(0, \sigma^2), \quad |\rho|<1,
\]
where $\rho$ controls the temporal correlation strength and $\epsilon_t$ is white Gaussian noise. The noisy time series is then:
\[
\tilde{X} = X + n.
\]
The SNR for colored noise is defined as:
\[
\text{SNR}_{\text{col}} = 10 \log_{10} \frac{\mathbb{E}[\|X\|^2]}{\mathbb{E}[\|n\|^2]},
\]
where the noise power $\mathbb{E}[\|n\|^2]$ accounts for the temporal correlation. We vary $\sigma$ to achieve target SNR values from 30 dB to 10 dB, and set $\rho = 0.5$ to reflect moderate temporal correlation. The resulting graph is shown in Figure~\ref{fig:robust_noise}.
We observe that colored noise causes far greater performance degradation than salt--and--pepper noise. Most methods deteriorate rapidly as the noise strength increases. A key characteristic of colored noise is that it introduces temporal drift, which makes many methods particularly sensitive to this form of corruption. This behavior is especially concerning in real--world monitoring systems, where drift commonly arises from sensor aging, calibration errors, or gradual environmental changes. In contrast, our method exhibits a more graceful degradation and consistently outperforms the baselines, even in the low--SNR regime, highlighting its suitability for long--term deployment in realistic, noisy environments.
The robustness of ARTA to different types of noise can be attributed to its adversarial training mechanism. During training, the detector is exposed to worst--case, mask--guided perturbations that are explicitly optimized to maximally affect the anomaly score. These perturbations are structured, coherent, and aligned with the detector’s most sensitive directions, making them more challenging than stochastic noise. As a result, the detector learns to rely on stable, distributed temporal patterns rather than brittle point--wise deviations.

\begin{figure*}[t]
    \centering
    \Description{A figure showing the results of robustness evaluation of anomaly detection methods under input corruption.}
    \begin{subfigure}{0.32\linewidth}
        \centering
        \includegraphics[width=0.8\linewidth]{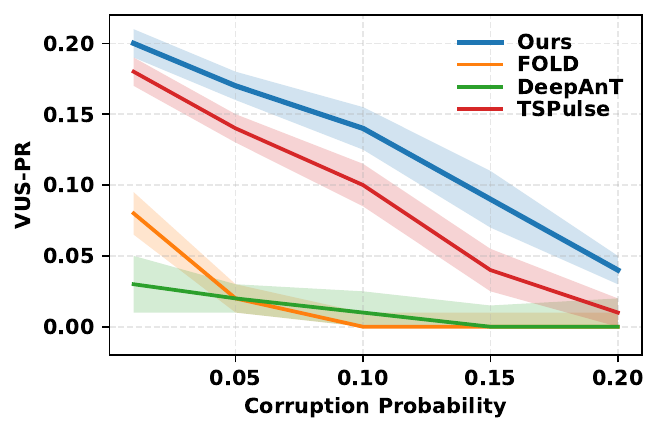}
        \caption{GECCO}
        \label{fig:robust_gecco_sp}
    \end{subfigure}
    \hfill
    \begin{subfigure}{0.32\linewidth}
        \centering
        \includegraphics[width=0.8\linewidth]{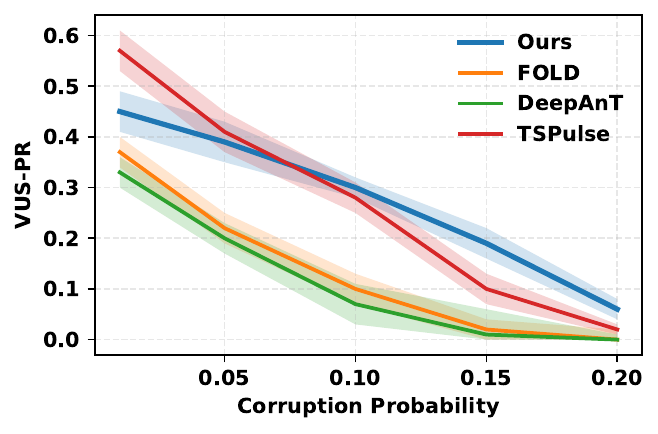}
        \caption{LTDB}
        \label{fig:robust_ltdb_sp}
    \end{subfigure}
    \hfill
    \begin{subfigure}{0.32\linewidth}
        \centering
        \includegraphics[width=0.8\linewidth]{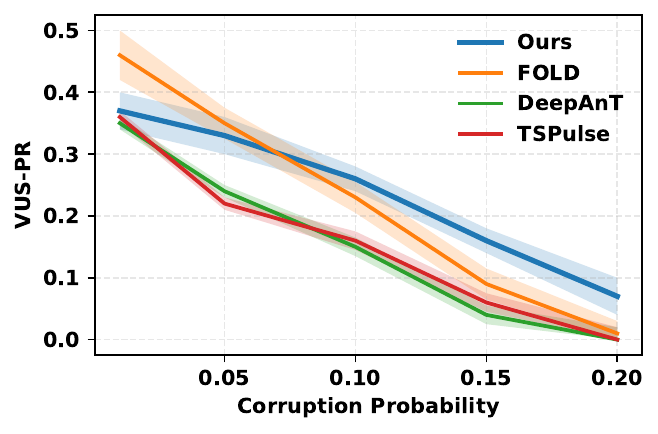}
        \caption{SMD}
        \label{fig:robust_smd_sp}
    \end{subfigure}

    \vspace{0.8em}

    \begin{subfigure}{0.32\linewidth}
        \centering
        \includegraphics[width=0.8\linewidth]{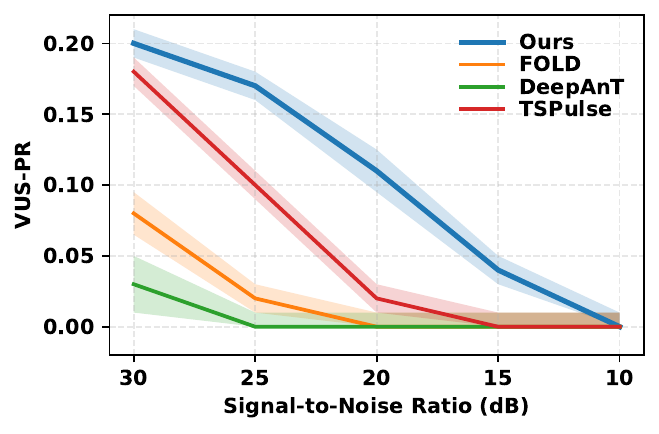}
        \caption{GECCO}
        \label{fig:robust_gecco_colored}
    \end{subfigure}
    \hfill
    \begin{subfigure}{0.32\linewidth}
        \centering
        \includegraphics[width=0.8\linewidth]{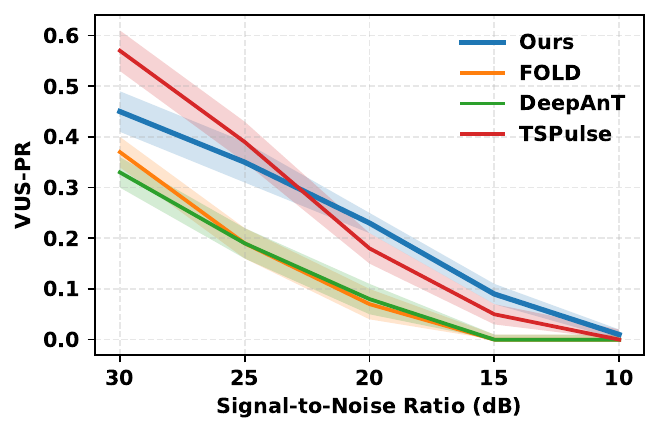}
        \caption{LTDB}
        \label{fig:robust_ltdb_colored}
    \end{subfigure}
    \hfill
    \begin{subfigure}{0.32\linewidth}
        \centering
        \includegraphics[width=0.8\linewidth]{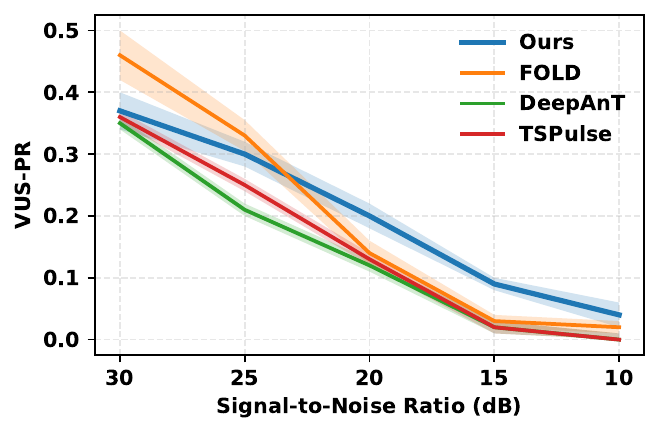}
        \caption{SMD}
        \label{fig:robust_smd_colored}
    \end{subfigure}

    \caption{
    Robustness evaluation of anomaly detection methods under input corruption.
    \textbf{Top row:} Salt--and--pepper noise with varying corruption probability.
    \textbf{Bottom row:} Colored noise with varying SNR.
    Results are shown for three datasets and averaged over five runs.
    }
    \label{fig:robust_noise}
\end{figure*}

\subsection{Ablation Study}

To evaluate the contribution of individual components in ARTA, we consider the following variants: \textbf{No Generator:} In this variant, the generator is completely removed, and only the detector (LSTM--based AE) is trained and evaluated. This ablation assesses the contribution of the generator and adversarial training by measuring performance when only the detector is trained.
\textbf{No Adversarial Training:}
This variant generates masks that perturb the input. However, the detector is not explicitly trained to withstand worst--case, model--aware perturbations. In this setting, the generator is static and does not adaptively produce masks to challenge the detector. 
\textbf{No Sparsity Penalty:}  
Here, the full adversarial training framework is retained, but the $\ell_1$ sparsity penalty in the generator's loss is removed. This evaluates the effect of enforcing compact, localized masks on detection performance.

These ablations allow us to systematically study the role of adversarial training, sparsity regularization, and detector robustness. Table~\ref{tab:ablation_comp} summarizes the results of each variant. As seen in this table, removing the generator results in substantial performance degradation across most datasets. In this setting, the model reduces to a standard LSTM--based autoencoder that relies solely on reconstruction error for anomaly scoring. While reconstruction--based methods can capture coarse deviations, they are known to be sensitive to noise and often fail to distinguish subtle or temporally distributed anomalies. The observed drop in performance confirms that adversarially guided perturbations are essential to learn discriminative temporal representations relevant to tasks.
Although the \emph{w/o Adversarial Training} variant consistently outperforms the \emph{w/o Generator} baseline, it remains inferior to the full model, highlighting the importance of adversarial interaction. Without the adversarial objective, the generator lacks incentive to identify truly sensitive temporal regions, and the detector does not learn robustness to task--relevant corruptions.
Finally, the \emph{w/o Sparsity Penalty} variant isolates the role of enforcing compact and localized masks. When the $\ell_1$ regularization on the generator output is removed, the generator tends to produce diffuse or near--uniform masks, perturbing large portions of the input sequence. While such perturbations can increase the detector’s anomaly score during training, they are less informative and weaken the adversarial signal. As a result, the detector learns robustness primarily to coarse, global distortions rather than precise temporal patterns. The performance drop observed in this variant demonstrates that sparsity is a crucial ingredient for effective adversarial training and accurate anomaly detection.
Taken together, these ablations validate the core design principles of ARTA. The synergy between these components enables ARTA to capture distributed temporal dependencies that are critical for detecting range--based anomalies, explaining its superior performance across datasets.

\begin{table*}
    \centering
    \begin{tabular}{ccccccccccc}
\hline
Method & CATSv2 & Daphnet & Exathlon & GECCO & LTDB & MITDB & OPP & PSM & SMD & SVDB \\
\hline
Full Model & \textbf{0.31} & \textbf{0.41} & \textbf{0.95} & \textbf{0.20} & \textbf{0.45} & \textbf{0.21} & \textbf{0.35} & \textbf{0.30} & \textbf{0.37} & \textbf{0.48} \\
w/o Generator & 0.08 & 0.13 & 0.90 & 0.06 & 0.23 & 0.05 & 0.17 & 0.28 & 0.26 & 0.11\\
w/o Adversarial Training & 0.13 & 0.34 & 0.93 & 0.13 & 0.14 & 0.11 & 0.26 & 0.28 & 0.31 & 0.27\\
w/o Sparsity Penalty & 0.18 & 0.28 & 0.90 & 0.18 & 0.26 & 0.18 & 0.22 & 0.18 & 0.28 & 0.36 \\
\hline
Reconstruction Error & \textbf{0.31} & \textbf{0.41} & \textbf{0.95} & \textbf{0.20} & \textbf{0.45} & \textbf{0.21} & 0.35 & \textbf{0.30} & \textbf{0.37} & \textbf{0.48} \\
Mask--Weighted & \textbf{0.31} & 0.36 & 0.88 & 0.08 & 0.31 & 0.12 & \textbf{0.69} & 0.29 & 0.30 & 0.29 \\
Sensitivity--Gap & 0.25 & 0.11 & 0.25 & 0.09 & 0.23 & 0.12 & 0.07 & 0.15 & 0.09 & 0.22\\
\hline

    \end{tabular}
    \caption{Ablation study results. The top and bottom rows show the impact of components and anomaly scores, respectively.}
    \label{tab:ablation_comp}
\end{table*}

\subsubsection{Anomaly Scoring Strategies}
\label{sec:scores}

We investigate three strategies for computing anomaly scores at test time. Let $X$ denote an input window, $\hat X$ its reconstruction from the detector, and $M$ the generator mask. We can define our anomaly scores as:
\textbf{Detector--only:}  
The anomaly score is computed from the reconstruction error alone:
\[
S_{\text{detector}}(X) = \frac{1}{T N} \sum_{t=1}^{T} \sum_{n=1}^{N} (X_{t,n} - \hat X_{t,n})^2.
\]
It measures the deviation from the learned normal behavior, ignoring the generator mask.
\textbf{Mask--weighted:}  
The reconstruction error is weighted by the generator mask:
\[
S_{\text{MW}}(X) = \frac{1}{T N} \sum_{t=1}^{T} \sum_{n=1}^{N} M_t (X_{t,n} - \hat X_{t,n})^2.
\]
This emphasizes temporal regions that the generator identifies as most relevant to the detector.
\textbf{Sensitivity--gap:}  
We compute the difference in reconstruction error between the original and masked input:
\[
S_{\text{SG}}(X) = \frac{1}{T N} \sum_{t=1}^{T} \sum_{n=1}^{N} \big( (X_{t,n} - \hat X_{t,n})^2 - (X_{t,n} - \hat X_{t,n}^{\tilde X})^2 \big),
\]
where $\tilde X = M \odot X + (1--M) \odot B$ is the masked input. This strategy captures both the abnormality and the sensitivity of the detector to perturbations induced by the generator.

The comparison of scoring strategies in Table~\ref{tab:ablation_comp} highlights the interaction between the detector’s reconstructive robustness and the generator’s sparsity--constrained perturbations. Across most benchmarks, the \emph{Reconstruction Error} yields the strongest performance, indicating that the detector effectively learns stable representations of both local and long--range temporal structure during adversarial training. In this regime, the mask generator primarily serves as a training--time regularizer: by enforcing invariance to structured perturbations, it enables reconstruction residuals to become a reliable indicator of normality. Since the generator is trained only on normal data and primarily targets vulnerable regions of the learned manifold, its contribution at inference is limited. As a result, it can be safely removed during deployment, preserving performance while eliminating inference--time overhead.
An exception arises on the OPP dataset, where the \emph{mask--weighted} score substantially outperforms pure reconstruction. OPP contains complex human activities in which anomalies closely resemble normal behavior but differ in subtle, high--frequency temporal patterns. In such cases, reconstruction errors remain small because anomalous signals lie near the learned manifold. The mask--weighted score functions as a learned attention mechanism, amplifying decision--critical deviations that reconstruction alone may suppress. This suggests that for datasets with manifold--aligned or highly localized anomalies, incorporating generator--informed scoring provides essential contrastive information beyond what reconstruction can capture.

\subsection{Effect of Backbone Architecture}
\label{Backbone}

An interesting question to explore is the extent to which the observed performance gains depend on specific architectural choices. In the main paper, both the generator and detector are instantiated as LSTM--based models, reflecting a common design choice for temporal modeling. In this section, we investigate the sensitivity of our framework to the backbone architecture of both components.

Specifically, we study the impact of replacing recurrent architectures with feedforward, convolutional, and transformer--based alternatives, and the effect of using reconstruction--based versus prediction--based detectors within the same adversarial training framework. These experiments aim to assess whether the benefits of adversarial masking and joint training arise from architectural inductive biases or from the proposed learning principle itself.

Table~\ref{tab:backbone_arch} reports the performance across different generator--detector backbone combinations. Several observations emerge.

First, although LSTM--based models consistently achieve the strongest overall performance, the proposed framework remains effective across different architectural choices. Notably, CNN--based variants maintain competitive results, indicating that explicit recurrence is not strictly necessary for adversarial masking to be effective. In contrast, fully connected architectures suffer a noticeable performance degradation, suggesting that a temporal inductive bias is essential for stable adversarial interaction. Interestingly, transformer--based architectures do not provide any advantage over LSTM or CNN backbones. While attention--based models excel at capturing global context, their permutation--invariant formulation can overlook the local temporal continuity required for precise anomaly localization. Consequently, in our experiments, we deliberately adopt a stability--regularized LSTM backbone to preserve the causal structure of multivariate time series while enforcing the Lipschitz constraints required by our stability analysis.

Second, Table~\ref{tab:recon_pred} compares reconstruction--based and prediction--based detectors. Reconstruction--based detectors generally outperform prediction--based ones, especially under severe distribution shifts. This can be attributed to the fact that prediction--based models may implicitly smooth anomalies when forecasting over short horizons, whereas reconstruction error more directly captures deviations induced by adversarial masking.

Overall, these results suggest that while architectural choices influence absolute performance, the gains from adversarial training and learned masking are robust and not tied to a specific backbone or detector formulation.

\begin{table*}
    \centering
    \begin{tabular}{ccccccccccc}
\hline
Method & CATSv2 & Daphnet & Exathlon & GECCO & LTDB & MITDB & OPP & PSM & SMD & SVDB \\
\hline
LSTM & \textbf{0.31} & \textbf{0.41} & \textbf{0.95} & 0.20 & \textbf{0.45} & \textbf{0.21} & 0.35 & \textbf{0.30} & 0.37 & \textbf{0.48} \\
CNN & 0.28 & 0.38 & 0.93 & \textbf{0.22} & 0.41 & 0.18 & \textbf{0.41} & 0.28 & \textbf{0.41} & 0.42 \\
Transformer & 0.24 & 0.38 & \textbf{0.95} & 0.16 & \textbf{0.45} & 0.12 & 0.38 & 0.22 & 0.28& 0.38\\
FC & 0.23 & 0.27 & 0.92 & 0.20 & 0.38 & 0.14 & 0.32 & 0.20 & 0.28 & 0.33 \\
\hline
    \end{tabular}
\caption{Effect of generator and detector backbone architectures. All results are reported in terms of VUS--PR.}
\label{tab:backbone_arch}
\end{table*}

\begin{table*}
    \centering
    \begin{tabular}{ccccccccccc}
\hline
Method & CATSv2 & Daphnet & Exathlon & GECCO & LTDB & MITDB & OPP & PSM & SMD & SVDB \\
\hline
Reconstruction & \textbf{0.31} & \textbf{0.41} & \textbf{0.95} & \textbf{0.20} & \textbf{0.45} & \textbf{0.21} & 0.35 & \textbf{0.30} & \textbf{0.37} & \textbf{0.48} \\
Prediction & 0.28 & 0.34 & \textbf{0.95} & \textbf{0.20} & 0.38 & 0.18 & \textbf{0.37} & 0.22 & 0.33 & 0.40 \\
\hline
    \end{tabular}
\caption{Comparison between reconstruction--based and prediction--based detectors under the same adversarial training setup. Both methods use LSTM as their backbone architecture.}
\label{tab:recon_pred}
\end{table*}

\subsection{Effect of Baseline--Aware Perturbation}

We study the impact of incorporating a baseline signal in the mask--based perturbation model used by ARTA. Recall that our framework applies the generator--produced mask using a baseline--aware formulation,
\[
\tilde{X} = X \odot M + (1 - M) \odot B,
\]
where $B$ represents a reference signal (e.g., per--sensor mean). To isolate the role of the baseline, we compare this formulation against a variant where the baseline is removed, i.e., masked regions are directly suppressed via $\tilde{X} = X \odot M$.

Table~\ref{tab:baseline} reports the anomaly detection performance across multiple datasets. Incorporating a baseline consistently improves performance across all benchmarks. In contrast, removing the baseline leads to noticeable degradation, particularly on datasets with complex temporal structure.

This behavior can be attributed to the fact that hard masking introduces unnatural distribution shifts and scale artifacts, which the detector may exploit as spurious cues. Baseline--aware perturbations preserve the overall signal structure and keep masked inputs on--manifold, enabling the adversarial training objective to more effectively encourage robustness and reliance on distributed temporal patterns rather than isolated point--wise deviations.

\begin{table*}
    \centering
    \begin{tabular}{ccccccccccc}
\hline
Anomaly Score & CATSv2 & Daphnet & Exathlon & GECCO & LTDB & MITDB & OPP & PSM & SMD & SVDB \\
\hline
with Baseline & \textbf{0.31} & \textbf{0.41} & \textbf{0.95} & \textbf{0.20} & \textbf{0.45} & \textbf{0.21} & 0.35 & \textbf{0.30} & \textbf{0.37} & \textbf{0.48} \\
w/o Baseline & 0.29 & 0.35 & 0.93 & 0.14 & 0.42 & 0.20 & \textbf{0.37} & 0.28 & 0.36 & 0.45 \\

\hline

    \end{tabular}
    \caption{Effect of baseline--aware perturbation}
    \label{tab:baseline}
\end{table*}

\subsection{Interpretation of Adversarial Masks}

\begin{figure}[t!]
    \centering
    \Description{A figure showing qualitative examples of generator masks on selected samples.}
    \begin{subfigure}[b]{0.48\linewidth}
        \centering
        \includegraphics[width=\linewidth]{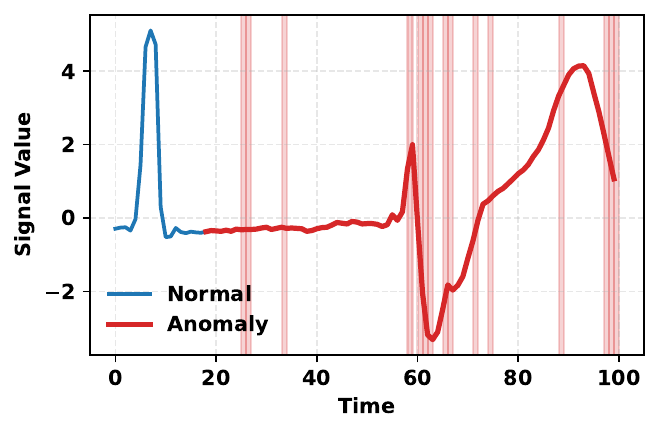}
        \caption{LTDB}
        \label{fig:subfig1}
    \end{subfigure}
    \hfill
    \begin{subfigure}[b]{0.48\linewidth}
        \centering
        \includegraphics[width=\linewidth]{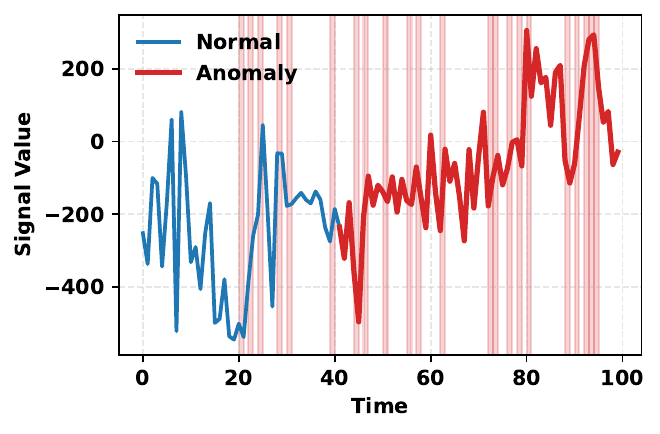}
        \caption{OPP}
        \label{fig:subfig2}
    \end{subfigure}
    
   \caption{Qualitative examples of generator masks on selected samples. Mask values are thresholded and highlighted in red.}
    \label{fig:two_figs}
\end{figure}

A central component of ARTA is the adversarial generator. While the generator's masks are visually interpretable, it is important to clarify what they explain. Although the generator produces temporal masks, these masks should be interpreted as explanations of the detector’s sensitivity rather than precise localizations of ground--truth anomalies. The generator is trained adversarially to identify worst--case perturbations that maximize the detector’s anomaly response, which need not coincide with the semantic or annotated extent of anomalies. As a result, the learned masks highlight regions that are most influential for the detector’s decision, rather than regions that are intrinsically anomalous. Results of Table~\ref{tab:ablation_comp} empirically support this distinction by showing that including the generator output in anomaly scoring leads to substantial performance degradation compared to the standard detector score, confirming that the adversarial masks are not suitable as standalone anomaly detectors or localization signals.
We therefore use the generator output exclusively for post--hoc interpretability analysis. Figure~\ref{fig:two_figs} presents qualitative examples illustrating the relationship between the input time--series and the corresponding masks. As seen in this figure, the masks consistently focus on sharp transitions or localized temporal regions that strongly influence the detector, even when these regions cover only a subset of the annotated anomaly window. This behavior is consistent with the adversarial objective, which emphasizes sensitivity rather than semantic completeness.
These results indicate that the adversarial masks learned by ARTA provide model--faithful explanations, offering insight into where and why the detector reacts strongly, without being conflated with the scoring mechanisms.

\section{Conclusion}

We introduced ARTA, an adversarial--robust framework for TSAD that jointly trains a detector and a sparsity--constrained mask generator in an adversarial formulation. By explicitly exposing the detector to worst--case, structured temporal perturbations during training, ARTA promotes reliance on stable, task--relevant temporal patterns rather than brittle point--wise cues.
Extensive evaluations on state--of--the--art benchmarks show that ARTA consistently outperforms state--of--the--art baselines and exhibits significantly improved robustness to input noise. Ablation studies further confirm the complementary roles of adversarial interaction, sparsity regularization, and mask--guided perturbations in driving both performance and robustness gains.
We hope this work serves as a benchmark for the next generation of TSAD methods, encouraging the field to move beyond unreliable evaluation metrics and to place greater emphasis on robustness in algorithm design.

\begin{figure*}[!t]
    \centering
    \Description{A figure showing robustness evaluation of anomaly detection methods under additive Gaussian noise with varying signal--to--noise ratios (SNR) across three representative datasets, averaged over five runs.}
    \begin{subfigure}{0.32\linewidth}
        \centering
        \includegraphics[width=\linewidth]{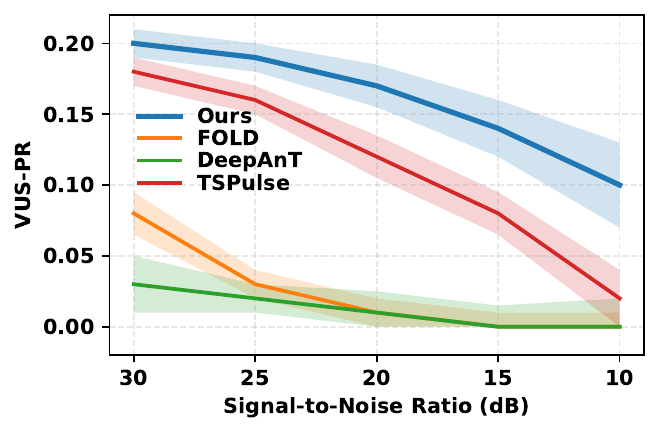}
        \caption{GECCO}
        \label{fig:robust_a}
    \end{subfigure}
    \hfill
    \begin{subfigure}{0.32\linewidth}
        \centering
        \includegraphics[width=\linewidth]{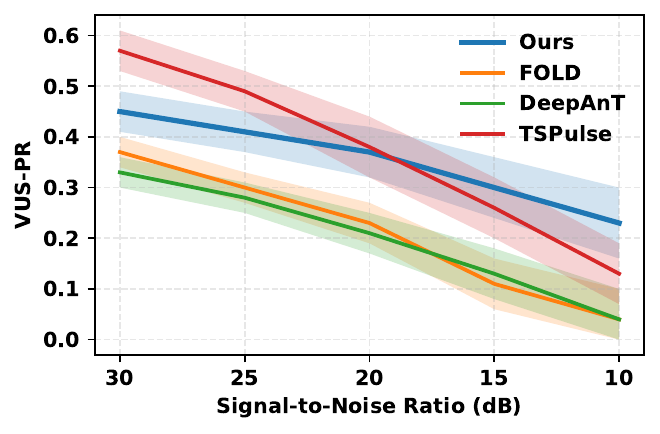}
        \caption{LTDB}
        \label{fig:robust_b}
    \end{subfigure}
    \hfill
    \begin{subfigure}{0.32\linewidth}
        \centering
        \includegraphics[width=\linewidth]{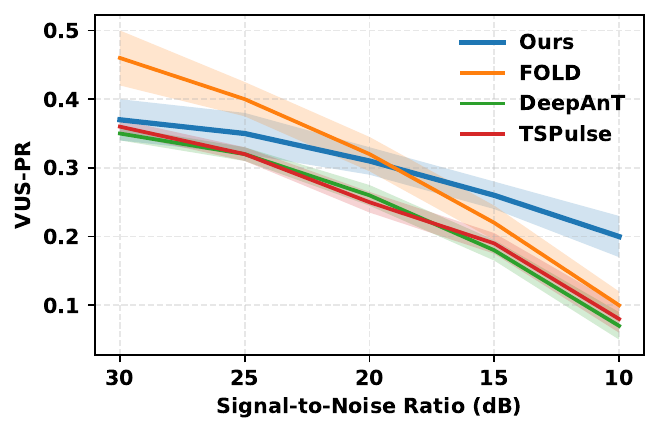}
        \caption{SMD}
        \label{fig:robust_c}
    \end{subfigure}

    \caption{Robustness evaluation of anomaly detection methods under additive Gaussian noise with varying signal--to--noise ratios (SNR) across three representative datasets, averaged over five runs. \b{Our method} consistently demonstrates slower performance degradation.}
    \label{fig:robustness}
\end{figure*}

\begin{table*}[!t]
    \centering
    \begin{tabular}{cccccccccc}
\hline
Hyperparameter & $0.0001$ & $0.001$ & $0.01$ & $0.1$ & $1$ & $10$ & $100$ & $1000$ &$10000$\\
\hline
Sparsity $\lambda$ ($\gamma = 0.1$) & 0.31 & 0.34 & \textbf{0.40} & 0.39 & 0.33 & 0.29 & 0.27 & 0.26 & 0.24\\
Robustness $\gamma$ ($\lambda = 0.01$) & 0.23 & 0.30 & 0.37 & \textbf{0.40}&0.38&0.36&0.30&0.28&0.27\\
\hline

    \end{tabular}
    \caption{Average performance across all datasets, computed as the mean VUS--PR, for different settings of the hyperparameters $\lambda$ and $\gamma$.}
    \label{tab:sensitivhp}
\end{table*}

\begin{acks}
To Robert, for the bagels and explaining CMYK and color spaces.
\end{acks}


\section*{Appendix}
\appendix

\section{Robustness to Gaussian Noise}
\label{app:nongauss}

In this section, we simulate noisy or adversarial--like conditions by applying additive Gaussian noise to the input time series:
\[
\tilde{X} = X + \epsilon, \quad \epsilon \sim \mathcal{N}(0, \sigma^2),
\]
where the noise variance $\sigma^2$ is selected to achieve a target signal--to--noise ratio (SNR). We systematically vary the SNR from high (mild noise) to low (severe noise), enabling a controlled evaluation of performance degradation under increasing corruption. Additive Gaussian noise is a widely used perturbation model that realistically reflects sensor noise while allowing reproducible control over noise intensity.

Anomaly detection performance under noise is evaluated using the VUS--PR metric. For each dataset, we record the VUS--PR score at each SNR level, producing a degradation curve that reflects robustness to perturbations. Figure~\ref{fig:robustness} presents these curves for four representative datasets, where the x--axis denotes SNR and the y--axis reports detection performance. Across all datasets, ARTA exhibits a noticeably slower degradation in performance compared to competing methods, indicating superior robustness to input noise. This is in line with the robustness trend that we observed for more challenging noise types in the main manuscript.

\section{Hyperparameter Sensitivity}

In this section, we investigate the sensitivity of model performance to the two hyperparameters introduced by our framework: $\gamma$, which controls the emphasis on minimizing the anomaly score for perturbed inputs, and $\lambda$, which regulates the strength of the sparsity constraint imposed on the generator.

To this end, we vary each hyperparameter independently and record the resulting performance. The results are reported in Table~\ref{tab:sensitivhp}. For the robustness hyperparameter $\gamma$, setting it to a small value leads to a significant performance degradation. This setting effectively ignores adversarial perturbations in the detector loss, which contradicts the core premise of our method. Conversely, excessively large values of $\gamma$ also harm performance, albeit to a lesser extent, since overemphasizing perturbed samples prevents the model from adequately learning the structure of clean, non--perturbed signals.

A similar trend is observed for $\lambda$. When $\lambda$ is set to a very small value, little sparsity constraint is imposed on the mask, allowing the generator to trivially corrupt the entire signal. As a result, the detector is trained on inputs that contain little to no meaningful information, leading to degraded performance. At the opposite extreme, excessively large values of $\lambda$ enforce overly sparse masks, severely restricting the generator’s flexibility in selecting timestamps to perturb. In this case, the resulting perturbations become easy for the detector to identify, preventing it from developing robustness against worst--case, localized perturbations.

\section{Additional Evaluation Metrics}
\label{app:metrics}

In Table~\ref{tab:metrics}, we report additional threshold--independent metrics that are standard in time--series anomaly detection, namely AUC--PR, AUC--ROC, VUS--ROC, and standard F1.

\begin{table}[!t]
\footnotesize
    \centering
    \begin{tabular}{ccccc}
\hline
Method & F1 ($\uparrow$) & AUC--PR ($\uparrow$) & AUC--ROC ($\uparrow$) & VUS--ROC ($\uparrow$) \\
\hline
PCA & 0.465 &0.383 & 0.798 & 0781 \\
OmniAnomaly &0.469&0.389 & 0.798 &0.786 \\
TSPulse (ZS) &0.358&0.307 & 0.698 & 0.726 \\
TSPulse (FT) &0.406& 0.347 & 0.755 & 0.769 \\
FOLD &0.432& 0.397 & 0.782 & 0.772 \\
\textbf{ARTA} &\textbf{0.507}&\textbf{0.414}&\textbf{0.845} & \textbf{0.860}\\
\hline
    \end{tabular}
    \caption{Comparison of F1, AUC--PR, AUC--ROC, and VUS--ROC between our proposed method and top--5 baselines. The reported values are averaged over all datasets.}
    \label{tab:metrics}
\end{table}

\section*{Gen AI Usage Disclosure}

We used generative AI tools exclusively to assist with English language proofreading, grammar correction, and improving the clarity and readability of the manuscript. All scientific ideas, experimental design, analyses, results, and conclusions were developed and verified solely by the authors. The AI assistance did not contribute to any technical content of the work.

\bibliographystyle{ACM-Reference-Format}
\bibliography{refs}

\end{document}